\def\eqref#1{equation~\ref{#1}}
\def\1{\bm{1}}
\DeclareMathAlphabet{\mathsfit}{\encodingdefault}{\sfdefault}{m}{sl}
\SetMathAlphabet{\mathsfit}{bold}{\encodingdefault}{\sfdefault}{bx}{n}
\DeclareMathOperator*{\argmax}{arg\,max}
\newcommand{\ie}{{\em i.e.}}
\newcommand{\eg}{{\em e.g.}}
\newcommand{\bP}{\mathbf{P}}
\newcommand{\bx}{\mathbf{x}}
\newcommand{\calX}{\mathcal{X}}
\newcommand{\calY}{\mathcal{Y}}
\newcommand{\calA}{\mathcal{A}}
\newcommand{\calG}{\mathcal{G}}
\newcommand{\hf}{\hat{f}}
\newcommand{\bbR}{\mathbb{R}}
\newcommand{\cmark}{\ding{51}}%
\newcommand{\xmark}{\ding{55}}%
\newtheorem{prop}{Proposition}
\title{Avoiding spurious correlations via logit correction}
\author{Sheng Liu$^{1\thanks{Work primarily done during an internship at Amazon.}}$ \quad Xu Zhang$^2$  \quad Nitesh Sekhar$^2$  \quad Yue Wu$^2$ \\
\textbf{Prateek Singhal$^2$  \quad Carlos Fernandez-Granda$^1$}\\
\small{$^1$New York University, USA;\;\texttt{shengliu@nyu.edu}\quad  $^2$Amazon Alexa AI, USA}
}
\begin{document}

\maketitle

\begin{abstract}
Empirical studies suggest that machine learning models trained with empirical risk minimization (ERM) often rely on attributes that may be spuriously correlated with the class labels. Such models typically lead to poor performance during inference for data lacking such correlations. In this work, we explicitly consider a situation where potential spurious correlations are present in the majority of training data. In contrast with existing approaches, which use the ERM model outputs to detect the samples without spurious correlations and either heuristically upweight or upsample those samples, we propose the logit correction (LC) loss, a simple yet effective improvement on the softmax cross-entropy loss, to correct the sample logit. We demonstrate that minimizing the LC loss is equivalent to maximizing the group-balanced accuracy, so the proposed LC could mitigate the negative impacts of spurious correlations. Our extensive experimental results further reveal that the proposed LC loss outperforms state-of-the-art solutions on multiple popular benchmarks by a large margin, an average 5.5\% absolute improvement, without access to spurious attribute labels. LC is also competitive with oracle methods that make use of the attribute labels.\footnote[2]{Code is available at \url{https://github.com/shengliu66/LC}.}
\end{abstract}

\section{Introduction}
\label{sec:intro}
In practical applications such as self-driving cars, a robust machine learning model must be designed to comprehend its surroundings in rare conditions that may not have been well-represented in its training set. However, deep neural networks can be negatively affected by spurious correlations between observed features and class labels that hold for well-represented groups but not for rare groups. For example, when classifying stop signs versus other traffic signs in autonomous driving, 99\% of the stop signs in the United States are red. A model trained with standard empirical risk minimization (ERM) may learn models with low average training error that rely on the spurious background attribute instead of the desired "STOP" text on the sign, resulting in high average accuracy but low worst-group accuracy (e.g., making errors on yellow color or faded stop signs). This demonstrates a fundamental issue: models trained on such datasets could be systematically biased due to spurious correlations presented in their training data~\citep{ben2013robust,rosenfeld2018elephant,beery2018recognition,zhang2019bag}. Such biases must be mitigated in many fields, including algorithmic fairness~\citep{du2021fairness}, machine learning in healthcare~\citep{oakden2020hidden,liu2020design,liu2022generalizable}, and public policy~\cite{rodolfa2021empirical}.

Formally, spurious correlations occur when the target label is mistakenly associated with one or more confounding factors presented in the training data. The group of samples in which the spurious correlations occur is often called the \textit{majority group} since spurious correlations are expected to occur in most samples, while the \textit{minority groups} contain samples whose features are not spuriously correlated. The performance degradations of ERM on a dataset with spurious correlation~\citep{nagarajan2021understanding, nguyen2021avoiding} are caused by two main reasons: 1) the geometric skew and 2) the statistical skew. For a robust classifier, the classification margin on the minority group should be much larger than that of the majority group~\citep{nagarajan2021understanding}. However, a classifier trained with ERM maximizes margins and therefore leads to equal training margins for the majority and minority groups. This results in geometric skew. The statistical skew is caused by slow convergence of gradient descent, which may cause the network to first learn the ``easy-to-learn'' spurious attributes instead of the true label information and rely on it until being trained for long enough~\citep{nagarajan2021understanding, liu2020early, pmlr-v162-liu22w}.

To determine whether samples are from the majority or minority groups, we need to know the group information during training, which is impractical. Therefore, many existing approaches consider the absence of group information and first detect the minority group~\citep{nguyen2021avoiding, pmlr-v139-liu21f, DBLP:conf/nips/NamCALS20} and then upweight/upsample the samples in the minority group during training~\citep{li2019repair, DBLP:conf/nips/NamCALS20, NEURIPS2021_disentangled,liu2021just}. While intuitive, upweighting only addresses the statistical skew~\citep{nguyen2021avoiding}, and it is often hard to define the weighted loss with an optimal upweighting scale in practice. Following \cite{menon2013statistical,collell2016reviving} on learning from imbalanced data, we argue that the goal of training a debiased model is to achieve a high average accuracy over all groups (Group-Balanced Accuracy, GBA, defined in Sec.~\ref{sec:formulation}), implying that the training loss should be Fisher consistent with GBA~\citep{menon2013statistical,collell2016reviving}. In other words, the minimizer of the loss function should be the maximizer of GBA.

In this paper, we revisit the logit adjustment method~\citep{DBLP:conf/iclr/MenonJRJVK21} for long-tailed datasets, and propose a new loss called logit correction (LC) to reduce the impact of spurious correlations. We show that the proposed LC loss is able to mitigate both the statistical and the geometrical skews that cause performance degradation. More importantly, under mild conditions, its solution is Fisher consistent for maximizing GBA. In order to calculate the corrected logit, we study the spurious correlation and propose to use the outputs of the ERM model to estimate the group priors. To further reduce the geometrical skew, based on MixUp~\citep{zhang2018mixup}, we propose a simple yet effective method called Group MixUp to synthesize samples from the existing ones and thus increase the number of samples in the minority groups.

The main contributions of our work include:
\begin{itemize}
    \item We propose logit correction loss to mitigate spurious correlations during training. The loss ensures the Fisher consistency with GBA and alleviates statistical and geometric skews.
    \item We propose the Group MixUp method to increase the diversity of the minority group and further reduce the geometrical skew.
    \item The proposed method significantly improves GBA and the worst-group accuracy when the group information is unknown. With only 0.5\% of the samples from the minority group, the proposed method improves the accuracy by 6.03\% and 4.61\% on the Colored MNIST dataset and Corrupted CIFAR-10 dataset, respectively, over the state-of-the-art.
\end{itemize}
\section{Related Work}
\label{sec:related_work}
Spurious correlations are ubiquitous in real-world datasets. A typical mitigating solution requires to first detect the minority groups and then design a learning algorithm to improve the group-balanced accuracy and/or the worst-group accuracy. We review existing approaches based on these two steps.

\textbf{Detecting Spurious Correlations.} Early researches often rely on the predefined spurious correlations~\citep{kim2019learning,Sagawa2019DistributionallyRN,li2019repair}. While effective, annotating the spurious attribute for each training sample is very expensive and sometimes impractical.
Solutions that do not require spurious attribute annotation have recently attracted a lot of attention. Many of the existing works~\citep{DBLP:conf/nips/SohoniDAGR20,DBLP:conf/nips/NamCALS20,liu2021just,DBLP:conf/icml/ZhangSZFR22} assume that the ERM model tend to focus on spurious attribute~(but may still learn the core features~\cite{kirichenko2022last,weidistributionally}), thus ``hard'' examples (whose predicted labels conflict with the ground-truth label) are likely to be in the minority group. \cite{DBLP:conf/nips/SohoniDAGR20,seo2022unsupervised}, on the other hand, propose to estimate the unknown group information by clustering. Our work follows the path of using the ERM model. 

\textbf{Mitigating Spurious Correlations.} Previous works~\citep{nagarajan2021understanding, nguyen2021avoiding} show that the geometric skew and the statistical skew are the two main reasons hurting the performance of the conventional ERM model. Reweighting~(resampling), which assigns higher weights~(sampling rates) to minority samples, is commonly used to remove the statistical skew~\citep{li2019repair, DBLP:conf/nips/NamCALS20, NEURIPS2021_disentangled,liu2021just}. While intuitive, reweighting has limited effects to remove the geometric skew~\citep{nagarajan2021understanding}. Also, there are surprisingly few discussions on how to set the optimal weights. We argue that the reweighting strategy should satisfy the Fisher consistency~\citep{menon2013statistical}, which requires that the minimizer of the reweighted loss is the maximizer of the balanced-group accuracy~(see Sec.~\ref{sec:lc}).
On the other hand, synthesizing minority samples/features is widely utilized in removing the geometric skew. \cite{minderer2020automatic,kim2021biaswap} propose to directly synthesize minority samples using deep generative models. \cite{yao2022improving,han2022umix} synthesize samples by mixing samples across different domains. While synthesizing the raw image is intuitive, the computation complexity can be high. DFA~\citep{NEURIPS2021_disentangled} mitigates this issue by directly augmenting the minority samples in the feature space.  

DFA~\citep{NEURIPS2021_disentangled} is the most related work to our approach. It applies reweighting to reduce the statistical skew and feature swapping to augment the minority feature, thus removing the geometric skew. However, our approach uses logit corrected loss and is proved to be Fisher consistent with the group-balanced accuracy. The proposed logit corrected loss has a firmer statistical grounding and can reduce both the statistical skew and the geometric skew. By combining the logit correction loss and the proposed Group MixUp, our approach outperforms DFA, especially on the dataset containing very few minority samples. 
\section{Problem Formulation}
\label{sec:formulation}
Let's first consider a regular multi-class classification problem. Given a set of $n$ training input samples $\calX = \{(\bx_i, y_i)\}, i = 1,\ldots,n$, where, $\bx \in \bbR^d$ has a input dimension of $d$ and $y \in \calY = \{1, \ldots, L\}$ with a total number of $L$ categories. Our goal is to learn a function (neural network), $f(\cdot): \calX \rightarrow \bbR^L$, to maximize the classification accuracy $P_{\bx}(y = \arg\max_{y'\in \calY} f_{y'}(\bx))$. With ERM, we typically minimize a surrogate loss, \eg, softmax cross-entropy $\mathcal{L}_{CE}(\cdot)$ where,
\begin{equation}
    \mathcal{L}_{CE}(y, f(\bx)) = \log\left(\sum_{y'} e^{f_{y'}(\bx)}\right) - f_{y}(\bx).
\end{equation}
We assume there is a spurious attribute $\calA$ with $K$ different values in the dataset. Note that $K$ and the number of classes $L$ may not be equal. 
We define a combination of one label and one attribute value as a group $g = (a,y) \in \calA \times \calY$. 
The spurious correlation means an attribute value $a$ and a label $y$ commonly appear at the same time. 
Different from ERM, the goal of training a model to avoid spurious correlation is to maximize the Group-Balanced Accuracy (GBA):
\begin{equation}
    GBA(f) = \frac{1}{KL} \sum_{g \in \calG}\bP_{\bx | (y,a) = g}\left(y = \arg\max_{y'\in \calY} f_{y'}(\bx)\right).
\label{eq:balanced_acc}
\end{equation}
Note that spurious correlation is ``harmful'' when it is not present during the evaluation. The spurious attribute can not be disentangled from other features if spurious correlations are present in all samples.

\section{Our approach: Logit Correction}
\label{sec:our_approach}
\begin{figure}[tp]
\begin{center}
   \includegraphics[width=0.85\linewidth]{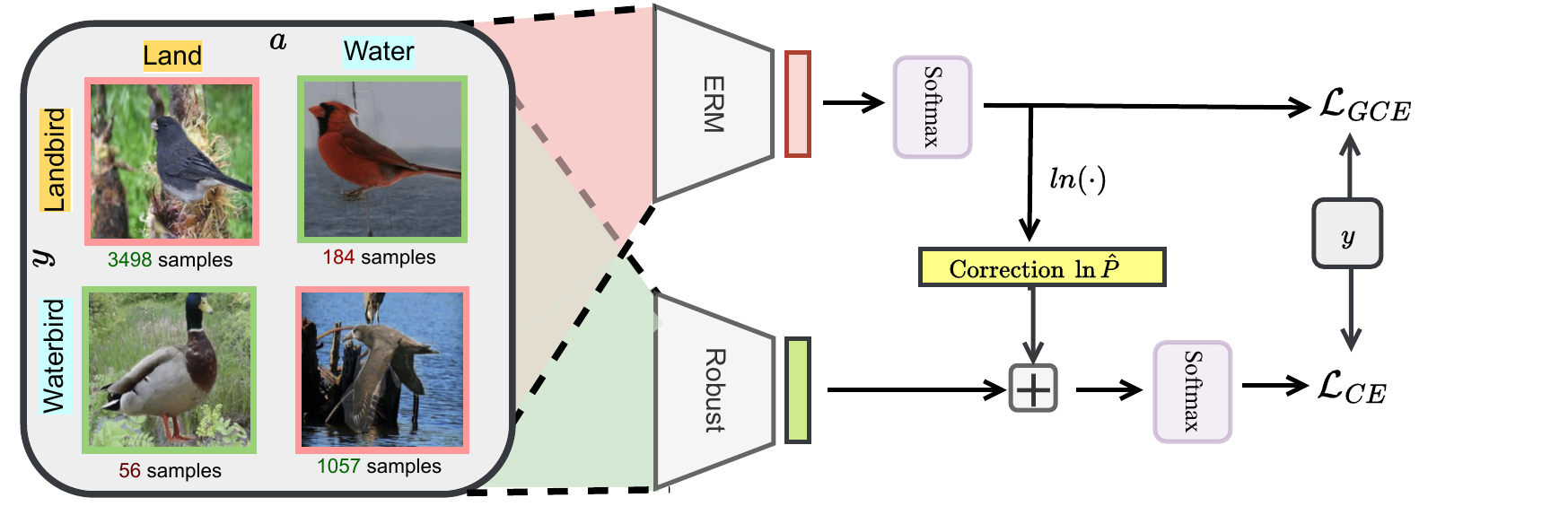}
\end{center}
   \caption{The overview of our proposed logit correction approach on the
Waterbirds dataset, where the background (water/land) is spuriously correlated with the foreground (waterbird/landbird). Most training samples belong to the group where the background matches the bird type (highlighted in red); While only a small fraction belongs to the groups where the background mismatches the bird type (highlighted in green). 
To address this issue, we train both an ERM network and a robust network simultaneously. The ERM network is trained with a generalized cross-entropy (GCE) loss that intentionally biases it toward the majority groups. The logit correction loss corrects the logits of the robust network by a term $\hat{p}$, which is generated by the probability predictions of the ERM network. After logit correction, the robust network is trained with the standard cross-entropy loss.}
   \vspace{-0.5em}
\label{fig:overview}
\end{figure}
Following~\cite{DBLP:conf/nips/NamCALS20}, we adopt a two-branch network~(as shown in Figure~\ref{fig:overview}). The top branch (denoted as $\hat{f}(\cdot)$) is a network trained with ERM using generalized
cross-entropy (GCE) loss~\cite{zhang2018generalized}:
\begin{equation}
    \mathcal{L}_{GCE} = \frac{1 - \hat{p}(\bx)^q}{q},
    \label{eq:gce}
\end{equation}
where $\hat{p}(\bx)$ represents the probability outputs (after a softmax layer) of the ERM model $\hat{f}$, $q \in [0, 1)$ is a hyperparameter. Compared to the standard cross-entropy loss, the gradient of GCE loss upweights examples where $\hat{p}(x)$ is large, which intentionally
biases $\hat{f}$ to perform better on majority~(easier) examples and poorly on minority~(harder) examples. The second (bottom) branch is trained to learn from the first branch. To be more specific, we use the probability output of the first branch to correct the logit output of the second branch. We further adopt Group MixUp to increase the number of unique examples in minority groups. The details of the method are demonstrated in the following sections. 

\subsection{Logit Correction as maximizing GBA}
\label{sec:lc}
Recall that our goal is to maximize GBA in Eq~\ref{eq:balanced_acc}, which depends on the (unknown) underlying distribution $\bP(x, y, a)$, the Bayes-optimal prediction function under this setting is $f^\ast \in \argmax_f GBA(f)$.
\begin{prop}
Let $P(y,a)$ be the prior of group $(y,a)$, and $P(y,a|\bx)$ is the true posterior probability of group $(y,a)$ given $\bx$, the prediction:
\begin{equation}
    \arg\max_{y \in \calY} f^\ast_{y}(\bx) = \arg\max_{y}\sum_a\frac{P(y,a|\bx)}{P(y,a)} = \arg\max_{y}\sum_a\frac{P(y|a,\bx)P(a|\bx)}{P(y,a)}
    \label{eq:arg_max_rewrite}
\end{equation}
is the solution to Eq.~\ref{eq:balanced_acc}. See proof in Appendix~\ref{appex:proof_theorem_1}.
\end{prop}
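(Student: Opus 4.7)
My plan is to convert the definition of $GBA(f)$ into an integral over $\bx$ with respect to the marginal $p(\bx)$, so that the optimal classifier can be identified pointwise. The calculation is essentially the standard Bayes-optimal classifier derivation under a balanced loss, adapted to the group setting.

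First, I would write each term $\bP_{\bx\mid (y,a)=g}\bigl(y=\arg\max_{y'}f_{y'}(\bx)\bigr)$ as an integral of the indicator $\mathbb{1}[y=\arg\max_{y'}f_{y'}(\bx)]$ against the conditional density $p(\bx\mid y,a)$. Then I would apply Bayes' rule,
\begin{equation}
p(\bx\mid y,a)=\frac{P(y,a\mid \bx)\,p(\bx)}{P(y,a)},
\end{equation}
to pull out the marginal $p(\bx)$ and the prior $P(y,a)$. After exchanging the sum over groups $g=(y^*,a^*)$ with the integral, I get
\begin{equation}
GBA(f)=\frac{1}{KL}\int p(\bx)\sum_{y^*,a^*}\mathbb{1}\!\left[y^*=\arg\max_{y'}f_{y'}(\bx)\right]\frac{P(y^*,a^*\mid\bx)}{P(y^*,a^*)}\,d\bx.
\end{equation}

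Next, for each fixed $\bx$ only the term with $y^*$ equal to the predicted label $\hat y(\bx)=\arg\max_{y'}f_{y'}(\bx)$ survives the indicator, so the inner expression collapses to $\sum_{a^*}P(\hat y(\bx),a^*\mid\bx)/P(\hat y(\bx),a^*)$. Since $p(\bx)\ge 0$ and $f$ influences the integrand only through $\hat y(\bx)$, maximizing $GBA$ is equivalent to maximizing this quantity pointwise in $\bx$. This gives the first displayed equality in the proposition, $\arg\max_y f^\ast_y(\bx)=\arg\max_y\sum_a P(y,a\mid\bx)/P(y,a)$. The second equality is just the chain rule $P(y,a\mid\bx)=P(y\mid a,\bx)P(a\mid\bx)$.

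I do not anticipate any real obstacle; the argument is a direct computation. The only subtlety worth flagging is justifying the pointwise optimization step, i.e.\ noting that the set of realizable predictors $\hat y:\calX\to\calY$ is unrestricted (no regularity assumed on $f$), so we may independently choose $\hat y(\bx)$ at each $\bx$ to maximize the non-negative integrand; ties can be broken arbitrarily without affecting the value of the integral. A brief remark that the $1/(KL)$ factor is an irrelevant positive constant for the argmax completes the proof.
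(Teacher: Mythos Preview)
Your proposal is correct and follows essentially the same route as the paper's proof: both rewrite $GBA$ as an integral over $\bx$ against $p(\bx)$ via Bayes' rule, collapse the group sum to $\sum_a P(y,a\mid\bx)/P(y,a)$ for the predicted label, and then optimize pointwise. The only cosmetic difference is that the paper phrases the pointwise step as placing all mass of a (possibly randomized) predictor $P(c=y^{(j)}\mid\bx)$ on the largest term of a convex combination, whereas you work directly with the deterministic indicator; the argument and conclusion are identical.
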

Assume each example $\bx$ can only take one spurious attribute value (e.g. waterbirds can either be on the water or land, and can not be on both), that is to say, the prior probability $P(a|\bx)$ is $1$ when the spurious attribute $a = a_{\bx}$ and $0$ otherwise. We have
\begin{equation}
    \arg\max_{y \in \calY} f^\ast_{y}(\bx) = \arg\max_{y}P(y|a_{\bx},\bx)/P(y,a_{\bx}).
    \label{eq:arg_max_rewrite_2}
\end{equation}
Note that although $a_{\bx}$ is unknown in the dataset, it can be estimated using the outputs of ERM model (see Sec.~\ref{sec:prior_estimation}).  

Because we are using the second branch to estimate the posterior probability $P(y|a_{\bx}, \bx)$, supposing the underlying class probability $P(y|a_{\bx}, \bx) \propto \exp(f(\bx))$ for an (unknown) scorer $f$, we can rewrite Eq.~\ref{eq:arg_max_rewrite_2} as 
\begin{align}
    \arg\max_{y \in \calY} f^\ast_{y}(\bx) & = \arg\max_{y\in \calY} \exp(f_y(\bx))/P(y, a_{\bx}), \nonumber \\
    & = \arg\max_{y\in \calY} (f_y(\bx)  - \ln P(y, a_{\bx})).
\label{eq:arg_max_rewrite_3}
\end{align}

In other words, optimizing the GBA solution $f^\ast$ is equivalent to optimizing the ERM solution $f$ minus the logarithm of the group prior $\ln P(y, a_{\bx})$. In practice, we could directly bake logit correction into the softmax cross-entropy by compensating the offset. Specifically, we can use the corrected logits $f(\bx) + \ln \hat{P}_{y,a_{\bx}} $ instead of the original logits $f(\bx)$ to optimize the network, where $\hat{P}_{y,a_{\bx}}$ are estimates of the group priors $P(y, a_{\bx})$. Such that by optimizing the loss function as usual, we can derive the solution for $f^\ast$~\citep{DBLP:conf/iclr/MenonJRJVK21}. The logits corrected softmax cross entropy function can be written as,
\begin{equation}
\mathcal{L}_{LC}(y, f(\bx)) = \log\left(\sum_{y'} e^{f_{y'}(\bx)  + \ln\hat{P}_{y',a_{\bx}}}\right) - \left(f_y(\bx) + \ln\hat{P}_{y, a_{\bx}}\right).
    \label{eq:final_loss}
\end{equation}
We show that when $P(a|\bx)$ follows a one-hot categorical distribution, Eq.~\ref{eq:final_loss} is Fisher consistent with maximizing the GBA (Eq.~\ref{eq:balanced_acc}). 
Intuitively, the more likely the combination of $(y, a_{\bx})$ appears in the training dataset, the less we subtract from the original logits. Meanwhile, the less likely the combination of $(y, a_{\bx})$ appears in the training dataset, the more we subtract from the original logits, resulting in less confidence in the prediction.
It leads to larger gradients for samples in the minority group, making the network learns more from the minority group. 
To this end, the logit corrected loss helps reduce the statistical skew. 
Moreover, Eq.~\ref{eq:final_loss} can be further rewritten as 
\begin{align}
    \mathcal{L}_{LC}(y, f(\bx)) = \log \left(1 + \sum_{y'\neq y} e^{f_{y'}(\bx) - f_y(\bx) + \ln\left(\hat{P}_{y',a_{\bx}}/\hat{P}_{y,a_{\bx}}\right)}\right).
\end{align}
It's a pairwise margin loss~\citep{menon2013statistical}, which introduces a desired per example margin $\ln\left(\hat{P}_{y',a_{\bx}}/\hat{P}_{y,a_{\bx}}\right)$ into the softmax cross-entropy. A minority group example demands a larger margin since the margin is large when $\hat{P}_{y,a_{\bx}}$ is small. To this end, LC loss is able to mitigate the geometric skew resulting from maximizing margins. We also empirically compare the training margins of minority groups and majority groups in Figure~\ref{fig:margin} for different methods. In the next section, we will introduce, given a sample $\bx$ in the training dataset, how to estimate $a_{\bx}$ and the group prior.

\subsection{Estimating the Group Prior}
\label{sec:prior_estimation}
In this section, we analyze different spurious correlation relations. The spurious correlation between the target label and the spurious attribute can be categorized into 4 different types~(Figure~\ref{fig:mapping_info}): 1) \textbf{one-to-one}, where each label only correlates with one attribute value and vice versa; 2) \textbf{many-to-one}, where each label correlates with one attribute value, but multiple labels can correlate with the same attribute value; 3) \textbf{one-to-many}, where one label correlates with multiple attribute values; and 4) \textbf{many-to-many}, where multiple labels and multiple attribute values can correlate with each other. In the next section, we will first discuss the most common one-to-one situation and then extend it to other situations. 

\subsubsection{One-to-one}
\label{sec:one-to-one}
\begin{figure}[t]
\begin{center}
   \includegraphics[width=0.93\linewidth]{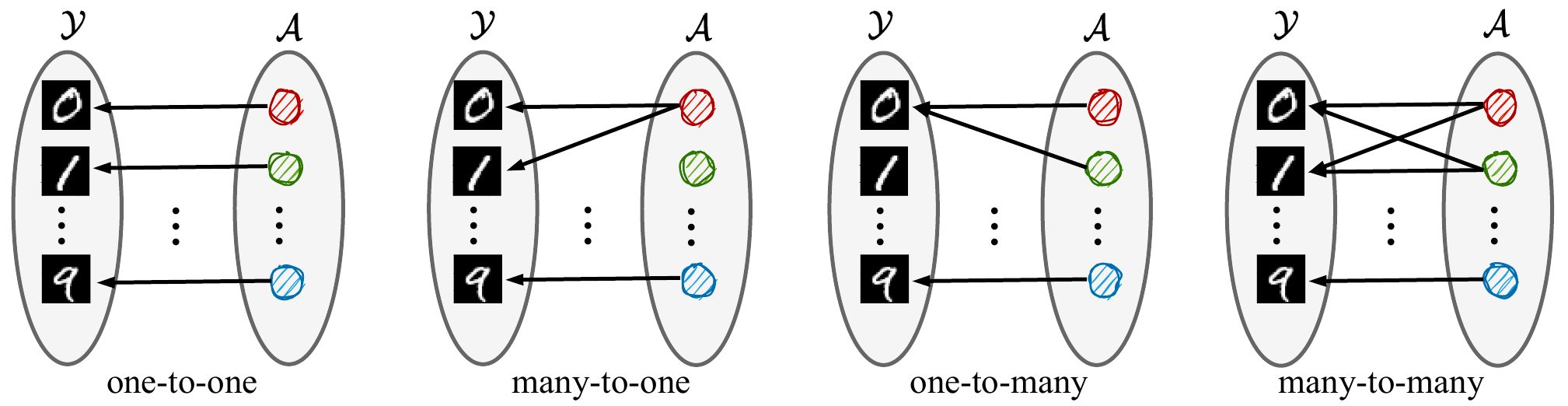}
\end{center}
   \caption{Example of different situations for spurious correlation that can be considered for the colored MNIST dataset. Spurious correlations existed in the majority groups: In the many-to-one situation, multiple digits share the same color; for the many-to-one situation, multiple digits are colored by the same color; Conversely, in the one-to-many situation, a single digit can be colored by several different colors. The many-to-many situation encompasses all of these possibilities, allowing for spurious correlations to occur in multiple directions. Note that the figure only illustrates how labels are spuriously correlated with the attributes.}
   \vspace{-0.5em}
\label{fig:mapping_info}
\end{figure}
One-to-one is the most common situation studied in the previous works, \eg, the original Colored MNIST dataset. To estimate the group prior probability $P(y,a)$, we consider
\begin{equation}
    P(y,a) = \int_{\bx}P(y,a,\bx)d\bx = \int_{\bx}P(y,a|\bx)P(\bx)d\bx \approx \frac{1}{N} \sum_\bx P(y,a|\bx).
\label{eq:prior_empirical}
\end{equation}
The last approximation is to use the empirical probability estimation to estimate the prior, where $N$ is the number of total samples. 
It's impractical to use the whole dataset to estimate the group prior after each training iteration is finished. In practice, we try different estimation strategies and find out that using a moving average of the group prior estimated within each training batch seems to produce reasonable performance~(see Figure~\ref{fig:prior_estimate}). We apply this method to all the experiments in the paper.

For a training sample of $(\bx_i, y_i, a_i)$, since the label $y_i$ is known, we have
\begin{equation}
    P(y,a|\bx_i) = \left\{
    \begin{array} {cc}
        P(a|\bx_i), & \text{if} \ y = y_i, \\
        0 & \text{otherwise.}
    \end{array}\right.
\label{eq:prior_given_y}
\end{equation}
Since the spurious correlation is one-on-one, the number of categories and the number of different attribute values are the same. Without loss of generality, we assume that the $j$-th category ($y^{(j)}$) is correlated with the $j$-th attribute value ($a^{(j)}$), where 
$j = [1,\ldots,L]$ and $L = K$. $L$ and $K$ are the number of categories and the number of different attribute values respectively. 

Since the ERM network would be biased to the spurious attribute instead of the target label, the prediction of the ERM network can be viewed as an estimation of $P(a|\bx_i)$. Formally, denote the output logits of the ERM network as $\hf(\bx_i)$, and the $j$-th element of $\hf(\bx)$ is denoted as $\hf_j(\bx_i)$, we have
\begin{equation}
    P(a = a^{(j)}|\bx_i) = \frac{\exp(\hf_j(\bx_i))}{\sum_{k=1}^K \exp(\hf_k(\bx_i))}.
\label{eq:erm}
\end{equation}

Given Eq.~\ref{eq:prior_empirical} to Eq.~\ref{eq:erm}, we can estimate the group prior $P(y,a)$. The associated attribute value in Eq.~\ref{eq:arg_max_rewrite_2} can be estimated as $a_{\bx} = \arg\max_{a}P(a|\bx)$. 

\subsubsection{Many-to-One}
\label{sec:many-to-one}
Under this scenario, multiple labels can be correlated with the same attribute value. Without loss of generality, we assume that $y^{(1)}$ and $y^{(2)}$ are correlated with $a^{(1)}$, and $y^{(j)}, j>2$ is correlated with $a^{(j-1)}$. 
We consider that the first 2 label predictions are spuriously correlated with attribute value $a^{(1)}$, \ie, $\hf_j(\bx) \propto P(y^{(j)}, a^{(1)}|\bx), j = 1,2$ and other predictions are similar as the one-to-one situation, where $\hf_j(\bx) \propto P(a^{(j-1)}|\bx), j > 2$. 
Compared to the one-on-one mapping, the only difference is the calculation of $P(a = a^{(1)}|\bx_i)$ in Eq.~\ref{eq:erm}. Considering both $y^{(1)}$ and $y^{(2)}$ contribute to $a^{(1)}$, we have, 
\begin{align}
    P(a = a^{(1)}|\bx_i) & = P(y^{(1)}, a = a^{(1)}|\bx_i) + P(y^{(2)}, a = a^{(1)}|\bx_i) \nonumber\\
    & = [\exp(\hf_1(\bx_i)) + \exp(\hf_2(\bx_i))]/\sum_{k=1}^K \exp(\hf_k(\bx_i)).
\label{eq:prior_many_to_one}
\end{align}
The associated attribute value can then be estimated as well, \ie  $\; a_{\bx} = \arg\max_{a}P(a|\bx)$. 

\subsubsection{One-to-Many}
\label{sec:one-to-many}
In this scenario, one label can be correlated with multiple attribute values. Since we don't have the attribute label, to distinguish different attributes correlated with the same label, we follow \cite{seo2022unsupervised} to create pseudo labels for multiple attribute values. Without loss of generality, we assume $y^{(1)}$ is correlated with $a^{(1)}$ and $a^{(2)}$. Therefore, $P(a = a^{(j)}|\bx_i) = \frac{w_j}{w_1+w_2} \exp(\hf_1(\bx_i))/\sum_{k=1}^K \exp(\hf_k(\bx_i)), j = 1,2$, where $w$ is the weight defined in \cite{seo2022unsupervised}, Eq. (6). The associated attribute value can also be estimated with the estimated posterior.

\subsubsection{Many-to-Many}
Since this is a combination of the previous cases, we can apply the solutions mentioned above together. In order to accurately calculate the prior probability $\bP(y, a)$, we need to at least know how the label set $\mathcal{Y}$ and the attribute set $\mathcal{A}$ are correlated. Annotating the category-level relation is much easier than annotating the sample-level attribute. In the case where even the category-level relation is not available, we show in Sec.~\ref{appex:other_situations} that directly applying the one-to-one assumption in other situations (one-to-many and many-to-one) still shows reasonable performance.

\subsection{Group MixUp}
To further mitigate the geometric skew and increase the diversity of the samples in the minority group, we proposed a simple group MixUp method. We also start with the one-to-one situation and other situations can be derived similarly. Same as Sec.~\ref{sec:one-to-one}, without loss of generality, we assume the $j$-th category ($y^{(j)}$) is correlated with the $j$-th attribute value ($a^{(j)}$). An training sample $(\bx_i, y_i)$ is in the minority group when $\arg\max_{y'}\hf_y'(\bx_i) \neq y_i$, else it is in the majority group. Given one sample $\bx_i$ in the minority group, we randomly select one sample $\bx_j$ in the majority group with the same label~($y_i=y_j$), instead of using the original $\bx_i$ in training, following the idea of MixUp~\citep{zhang2018mixup}, we propose to generate a new training example $(\bx'_i, y_i)$ as well as its correction term via the linear combination of the two examples, 
\begin{equation}
    \bx'_i = \lambda \bx_i + (1-\lambda) \bx_j, \quad  
    \hat{P}'_{y_i,.} = \lambda \hat{P}_{y_i,a^{(i)}} + (1-\lambda) \hat{P}_{y_j,a^{(j)}},
\label{eq:mix_up}
\end{equation}
where $\lambda \sim U(0.5,1)$ to assure that the mixed example is closer to the minority group example. Since both samples are with the same label, we expect their convex combination shares the same label. Using such a convex combination technique increases the diversity of minority groups. The pseudo code of the implementation can be found in Appendix~\ref{appex:pseudo_code}.
\vspace{-5mm}
\section{Experiments}
\label{sec:experiment}
In this section, we evaluate the effectiveness of the proposed logit correction (LC) method on five computer vision benchmarks presenting spurious correlations: Colored MNIST~(C-MNIST)~\citep{arjovsky2019invariant}, Corrupted CIFAR-10~(C-CIFAR10)~\citep{hendrycks2019robustness,DBLP:conf/nips/NamCALS20}, Biased FFHQ~(bFFHQ)~\citep{DBLP:conf/cvpr/KarrasLA19,NEURIPS2021_disentangled}, Waterbird~\citep{wah2011caltech}, and CelebA~\citep{liu2015deep}. Sample images in the datasets can be found in Figure~\ref{fig:dataset} of Appendix.
\vspace{-3mm}
\subsection{Experimental Setup}
\textbf{Datasets.} C-MNIST, C-CIFAR-10 and Waterbird are synthetic datasets, while CelebA and bFFHQ are real-world datasets. The above datasets are utilized to evaluate the generalization of baselines over various domains. 
The C-MNIST dataset is an extension of MNIST with colored digits, where each digit is highly correlated to a certain color which constitutes its majority groups. 
In C-CIFAR-10, each category of images is corrupted with a certain texture noise, as proposed in~\cite{hendrycks2019robustness}. 
Waterbird contains images of birds as “waterbird”
or “landbird”, and the label is spuriously correlated
with the image background, which is either “land” or
“water”. 
CelebA and bFFHQ are both human face images datasets. 
On CelebA, the label is blond hair or not and the gender is the spurious attribute. The group containing samples of male with blond hair is the minority group. 
bFFHQ uses age and gender as the label and spurious attribute, respectively. Most of the females are ``young'' and males are ``old''.

\textbf{Evaluation.} Following~\cite{DBLP:conf/nips/NamCALS20}, for C-MNIST and C-CIFAR-10 datasets, we train the model with different ratios of the number of minority examples to the number of majority examples and test the accuracy on a group-balanced test set (which is equivalent to GBA). The ratios are set to 0.5\%, 1\%, 2\%, and 5\% for both C-MNIST and C-CIFAR-10. For bFFHQ dataset, the model is trained with 0.5\% minority ratio and the accuracy is evaluated on the minority group~\cite{NEURIPS2021_disentangled}. 
For Waterbird and CelebA datasets, we measure the worst group accuracy~\citep{DBLP:conf/nips/SohoniDAGR20}.

\textbf{Baselines.} We consider six baselines methods:
\begin{itemize}[leftmargin=*]
\vspace{-0.7em}
\setlength\itemsep{0em}
    \item \emph{Empirical Risk Minimization (ERM)}: training with standard softmax loss on the original dataset.
    \item \emph{Group-DRO}~\citep{Sagawa2019DistributionallyRN}: Using the ground truth group label to directly maximize the worst-group accuracy.
    \item \emph{Learn from failure (LfF)}~\citep{DBLP:conf/nips/NamCALS20}: Using ERM to detect minority samples and estimate a weight to reweight minority samples.
    \item \emph{Just train twice (JTT)}~\citep{pmlr-v139-liu21f}:  Similar to LfF but weight the minority samples by a hyperparameter.
    \item \emph{Disentangled feature augmentation (DFA)}~\citep{NEURIPS2021_disentangled}: Using the generalized cross entropy loss~\citep{zhang2018generalized} to detect minority samples and reweight the minority. Using feature swapping to augment the minority group.
\end{itemize}

\textbf{Implementation details.} 
We deploy a multi-layer perception (MLP) with three hidden layers as the backbone for C-MNIST, and ResNet-18 for the remaining datasets except ResNet-50 for Waterbirds and CelebA.
The optimizer is Adam with $\beta = (0.9, 0.999)$. The batch size is set to 256. The learning rate is set to $1 \times 10^{-2}$ for C-MNIST, $1 \times 10^{-3}$ for Waterbird and C-CIFAR-10, and $1 \times 10^{-4}$ for CelebA and bFFHQ. For $q$ in Eq.~\ref{eq:gce}, it's set to 0.7 for all the datasets except for Waterbird which is set to 0.8. More details are described in Appendix.~\ref{sec: experiment_details}.

\section{Results}
\subsection{Classification Accuracy}
\label{sec:results}
\begin{table}[t]
\caption{Classification accuracy~(\%) evaluated on group balanced test sets of C-MNIST and
C-CIFAR-10 with varying ratio~(\%) of minority
samples. 
The baseline method results are taken from~\cite{NEURIPS2021_disentangled} as the same experiment settings are adopted. 
We denote whether the model requires group or spurious attribute annotations in advance by \xmark (i.e., not
required), and \cmark (i.e., required). 
Best performing results are marked in \textbf{bold}.}
\vspace{-2mm}
\small
\centering
\resizebox{0.98\textwidth}{!}{%
\begin{tabular}{lccccccccccc}
\toprule
      \multirow{2}{*}{\textbf{Methods}} & \textbf{Group} && \multicolumn{4}{c}{\textbf{C-MNIST}} && \multicolumn{4}{c}{\textbf{C-CIFAR-10}}\\
\cline{4-7}
\cline{9-12}\\[-3mm]
              &\textbf{Info}&& 0.5 & 1.0 & 2.0 & 5.0 && 0.5 & 1.0 & 2.0 & 5.0 \\
      \midrule
       Group DRO & \cmark  && 63.12 & 68.78 & 76.30 & 84.20 && 33.44 & 38.30 & 45.81 & 57.32\\
        \midrule
      ERM & \xmark && 35.19 (3.49) & 52.09 (2.88) & 65.86 (3.59) & 82.17 (0.74) && 23.08 (1.25) & 25.82 (0.33) & 30.06 (0.71) & 39.42 (0.64)\\
      JTT &\xmark  && 53.03 (3.89) & 62.9 (3.01) & 74.23 (3.21) & 84.03 (1.10) && 24.73(0.60) & 26.90(0.31) &  33.40(1.06)	 & 42.20(0.31)\\
      LfF & \xmark && 52.50 (2.43) & 61.89 (4.97) & 71.03 (2.44) & 84.79 (1.09) && 28.57 (1.30) & 33.07 (0.77) & 39.91 (0.30) & 50.27 (1.56) \\
      DFA & \xmark  && 65.22 (4.41) & 81.73 (2.34) & 84.79 (0.95) & 89.66 (1.09) && 29.95 (0.71) & 36.49 (1.79) & 41.78 (2.29) & 51.13 (1.28) \\
      \midrule
      LC(ours) & \xmark && {\bf 71.25 (3.17)} & {\bf 82.25 (2.11)} & {\bf 86.21 (1.02)}& {\bf 91.16 (0.97)} && {\bf 34.56 (0.69)} & {\bf 37.34 (1.26)} & 
      {\bf 47.81 (2.00)	} &
      {\bf 54.55 (1.26)}\\
      \bottomrule
\end{tabular}
}
\label{tab:base_results_1}
\vspace{-2mm}
\end{table}

Table~\ref{tab:base_results_1} reports the accuracies on group balanced test sets for all baseline approaches and the proposed method when trained with various minority-to-majority ratios. 
Models trained with ERM commonly show degraded performance and the phenomenon is aggravated by the decrease in the number of examples in the minority groups.
Compared to other approaches, LC consistently achieves the highest test accuracy. The performance gain is even more significant when the minority ratio is low. 
For example, compared to DFA~\cite{DBLP:conf/nips/SohoniDAGR20}, LC improves the accuracy by 6.03\%, 4.61\% on C-MNIST and C-CIFAR-10 datasets respectively, when the minority ratio is 0.5\%. While at 5\% minority ratio, the improvements are 1.51\% and 3.42\%. It shows the superior performance of the proposed method in datasets with strong spurious correlations. 
Even compared to the approach which requires the ground-truth attribute label during training (Group DRO), LC still achieves competitive or even better performance. Table~\ref{tab:benchmark_results} shows the model performances on Waterbird, CelebA and bFFHQ datasets. LC again achieves the highest worst/minority-group accuracy among all methods without group information. The clear performance gaps again prove the effectiveness of the proposed method. 

Note that all results demonstrated in both  Table~\ref{tab:base_results_1} and~\ref{tab:benchmark_results} are on datasets with one-to-one spurious correlation. We also tested the proposed algorithm on datasets with many-to-one and one-to-many correlations as well. The proposed method also outperforms the best baseline methods.
Additional experiments can be found in Appendix~\ref{appex:other_situations}.

\begin{table}[t]
\caption{
Worst-group accuracies on Waterbirds, CelebA, and minority-group accuracy on bFFHQ.
For the ERM, JTT and Group-DRO baselines, we provide the results reported in \citet{liu2021just}, except for bFFHQ, we rerun the baseline methods. 
The Group Info column shows whether group labels are available during training.
}
\vspace{-3mm}
\begin{center}
\small
\resizebox{0.85\textwidth}{!}{
\begin{tabular}{c c c c c c c c c c}
\hline\noalign{\smallskip}
\multirow{2}{*}{\textbf{Method}} & \multirow{2}{*}{\textbf{Group Info}} && {\bf Waterbirds} && {\bf CelebA} && {\bf CivilComments} && {\bf bFFHQ} 
\\
\cline{4-10}

\\[-3mm]
&  && Worst && Worst && Worst && Minority
\\[1mm]
\hline\\[-2.5mm]
Group-DRO  & \cmark &&
    91.4 
    && 
    88.9 && - && - \\
\hline\\[-3mm]
ERM  & \ding{55}  &&
    62.9 (0.3)
    &&
    46.9 (2.2) && 58.6 (1.7) && 56.7 (2.7)\\
JTT & \ding{55}  &&
     85.8 (1.2)	
     &&
     81.5 (1.7) && 69.3 (-) && 65.3 (2.5)	\\
LfF & \ding{55}  &&
     78.0 (0.9)
    && 
    77.2 (-) && 58.3 (0.5) && 62.2 (1.6)   \\ 
DFA & \ding{55}  && 
    87.7 (0.2)
    && 
    84.1 (1.2) && - && 63.9 (0.3) \\
\hline\\[-2.5mm]
LC(ours) & \ding{55} && 
    {\bf {90.5 (1.1)}} 
    && 
    {\bf {88.1 (0.8)}}
    && \textbf{70.3 (1.2)} && \textbf{70.0 (1.4)}\\
\bottomrule
\vspace{-3mm}
\end{tabular}}
\end{center}
\vspace{-6mm}
\label{tab:benchmark_results}
\end{table}
\vspace{-2mm}
\subsection{Ablation Study}
\label{sec:ablation study}
\paragraph{Effectiveness of each module.} Table~\ref{tab:ablation_study} demonstrates the effectiveness of the LC loss and the Group MixUp in the proposed method. The evaluation is conducted on the bFFHQ dataset. The first row shows the performance of the baseline ERM network. From rows 2-4, each proposed module helps improve the baseline method. Specifically, adding Group MixUp brings 6.35\% of the performance boost, and introducing logit correction is able to improve the performance by 9.64\%. Combining both of the elements achieves 12.80\% accuracy improvement. Compared our method without Group MixUp (third row in Table~\ref{tab:ablation_study}) to LfF in Table~\ref{tab:benchmark_results}, both methods use the same pipeline and the only difference is that we apply LC loss while LfF uses reweighting. 
The experiment result shows that the proposed LC loss clearly outperforms reweighting (62.2\% $\rightarrow$ 66.51\%). The reasons may be due to the proposed LC loss 1) is Fisher consistent with the balanced-group accuracy; and 2) is able to reduce the geometric skew as well as the statistical skew.\vspace{-0.6em}\\

{
\begin{minipage}{0.55\linewidth}
\small
\centering
\captionof{table}{Ablation studies on 1) Group MixUp, 2) correcting logit on bFFHQ. Each row indicates a different training setting with \cmark mark denoting the setting applied. While correcting the logit individually brings a significant performance boost, adding Group MixUp further improves the performance.}
\vspace{-0.9em}
\begin{tabular}{cc|c}
\toprule
\textbf{Group}   & \textbf{Logit}   &  \textbf{Minority Group}\\
\textbf{MixUp}   & \textbf{Correction} & \textbf{Accuracy}
\\
\midrule
\xmark & \xmark & 56.87
\\
 \cmark & \xmark & 63.22 
\\
 \xmark & \cmark & 66.51
\\
\cmark & \cmark & 69.67
\\
\bottomrule
\end{tabular}
\label{tab:ablation_study}
\end{minipage}
 \hspace{1em}
\begin{minipage}{0.4\linewidth}
\centering
\includegraphics[width=0.8\textwidth]{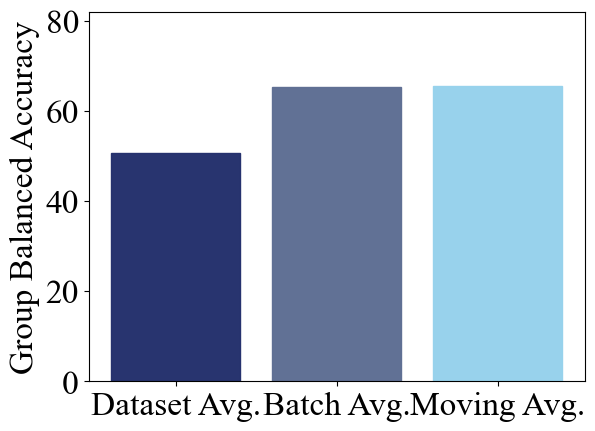}
\vspace{-0.9em}
\captionof{figure}{The performance comparison of different strategies for estimating the group prior (on CMNIST with ratio = 0.5\%). }
\label{fig:prior_estimate}
\end{minipage}
}

\paragraph{Influence of group prior estimate method.} We test how different group prior estimation strategies in  Eq.~\ref{eq:prior_empirical} affect the final performance. We tested 1) updating the prior using all samples in the datasets after finishing each training epoch (Dataset Avg.); 2) updating the prior using all samples in one training mini-batch (Batch Avg.); and 3) keeping a moving average for the batch-level prior (Moving Avg.). 
The result is shown in Fig.~\ref{fig:prior_estimate}. Dataset Avg. performs significantly worse than the other two strategies. This may be because the Dataset Avg. only updates the prior after each epoch. The delay in the prior estimation may mislead the model training, especially in the early training stage when the model prediction can change significantly.  

\begin{figure}[thp]
    \centering
    \begin{subfigure}[b]{0.32\textwidth}
         \centering
         \includegraphics[width=0.8\textwidth]{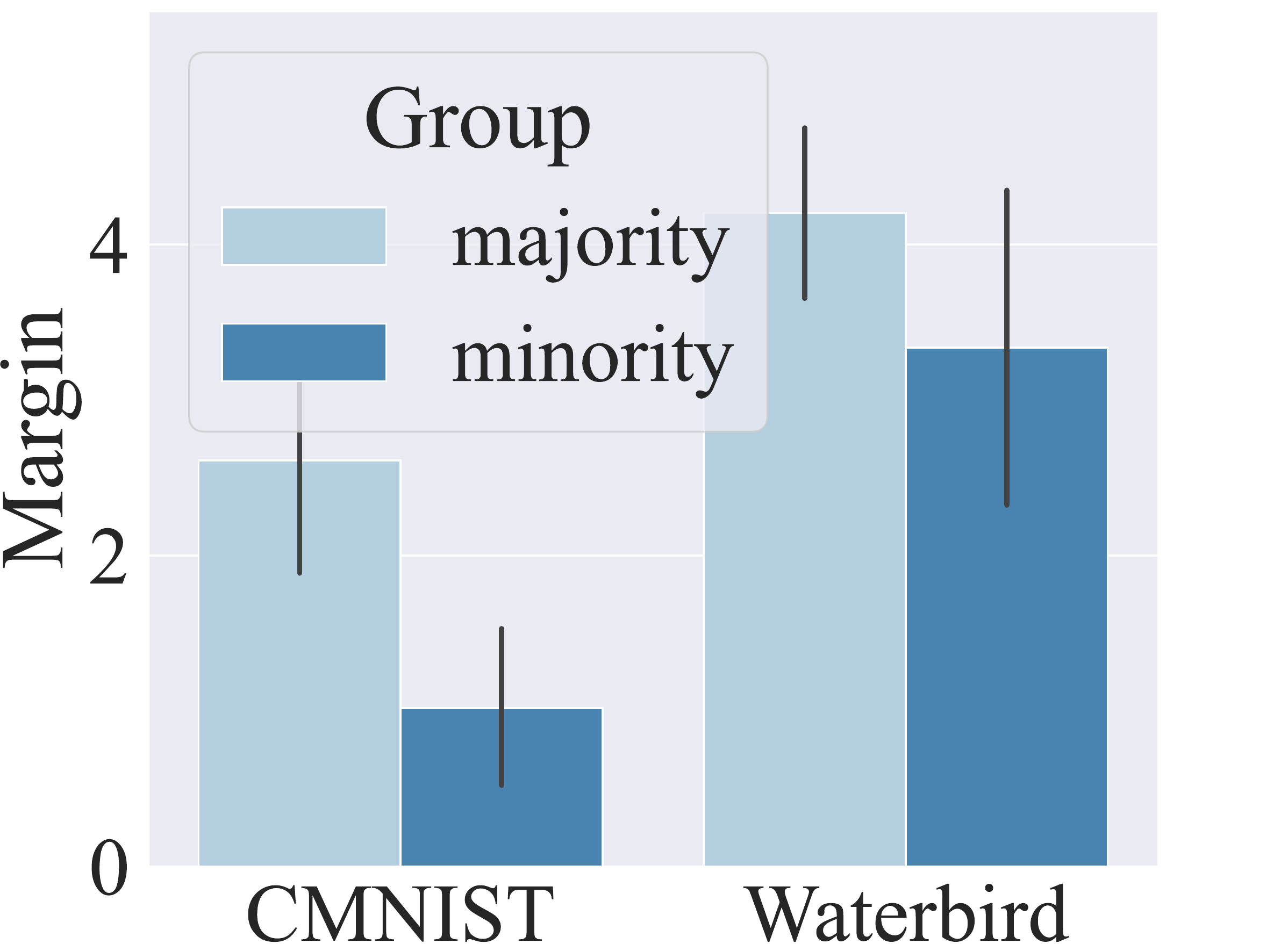}
         \caption{ERM}
     \end{subfigure}
     \begin{subfigure}[b]{0.32\textwidth}
         \centering
         \includegraphics[width=0.8\textwidth]{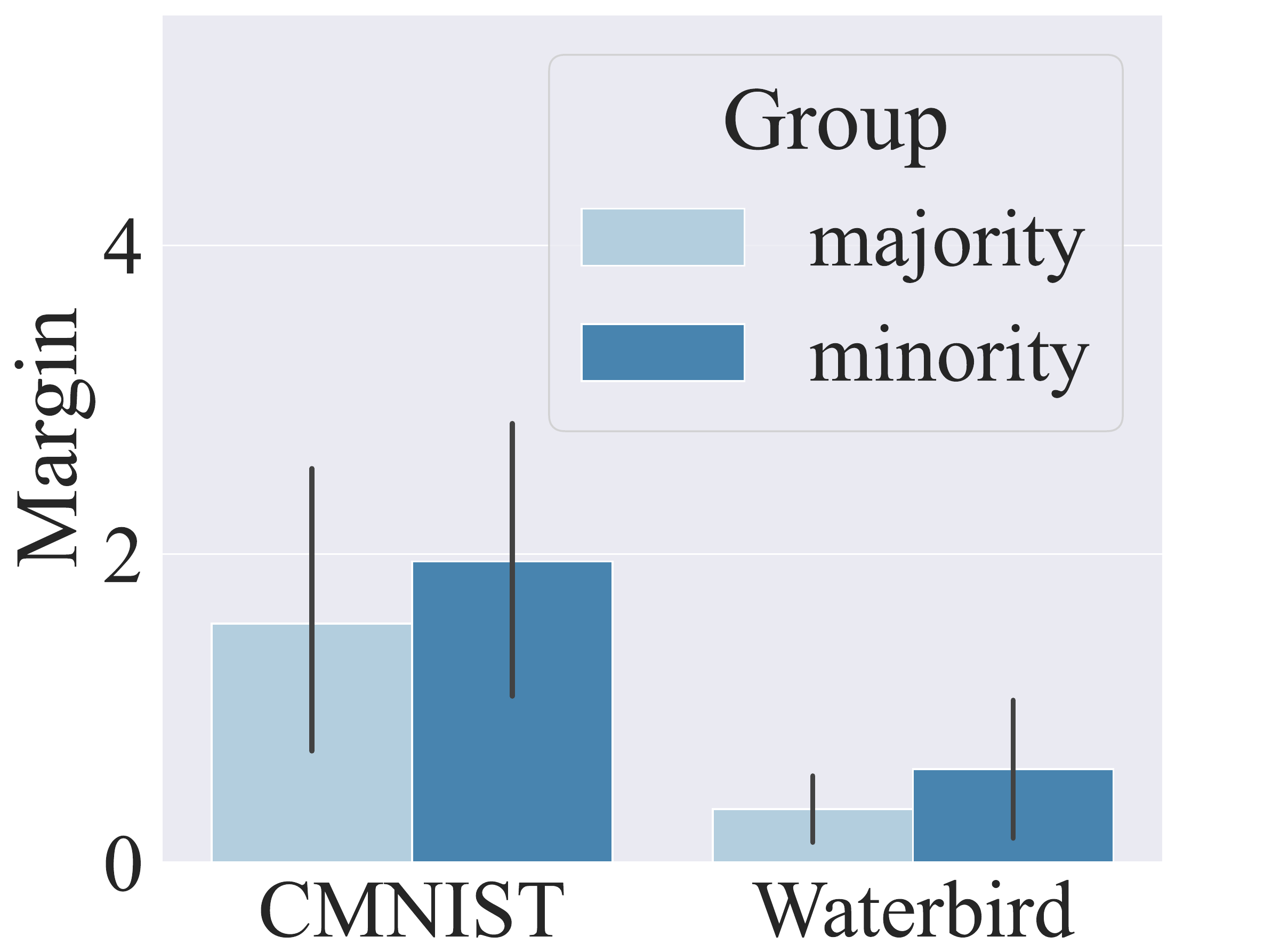}
         \caption{ERM + Group MixUp}
     \end{subfigure}
     \begin{subfigure}[b]{0.32\textwidth}
         \centering
         \includegraphics[width=0.8\textwidth]{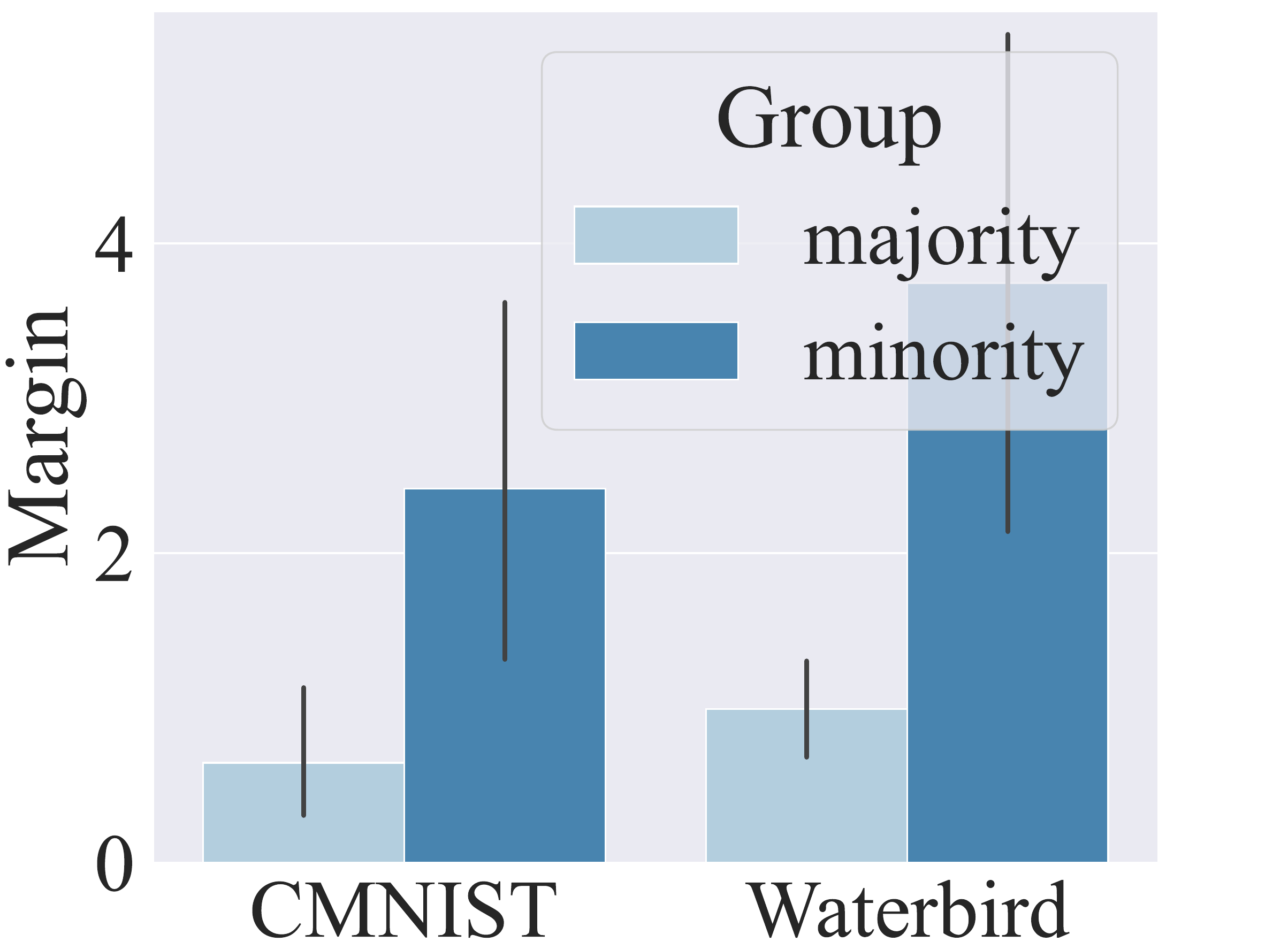}
         \caption{LC Loss}
     \end{subfigure}
     \vspace{-0.7em}
\caption{The effect of the proposed logit correction (LC) method on classification margins (defined in Appendix~\ref{appex:define_margin}) on CMNIST and Waterbird datasets. ERM produces a ratio (between the majority group margin and the minority group margin) $>1$, ERM + Group Mixup has a ratio $<1$ and the proposed LC loss achieves a ratio $\ll 1$.}
   \vspace{-0.9em}
\label{fig:margin}
\end{figure}

\paragraph{Analysis of training margins.}
We show how the proposed LC loss and Group MixUp help reduce the geometric skew. In Sec.~\ref{sec:intro}, we mentioned that a balanced classifier prefers a larger margin on the minority group compared to the margin on the majority group, \ie, the ratio between the majority group margin and the minority group margin should less than 1.
In Figure~\ref{fig:margin}, we show the minority group margin and the majority group margin (defined in Appendix~\ref{appex:define_margin}) of the model trained with ERM, LC loss, and ERM + Group MixUp on both C-MNIST and Waterbird datasets respectively. Figure~\ref{fig:margin} shows that both the proposed LC loss and the Group MixUp can reduce the geometric skew since both of them have a ratio that is less than 1.

\section{Conclusion}
In this work, we present a novel method consisting of a logit correction loss with Group MixUp. The proposed method can improve the group balanced accuracy and worst group accuracy in the presence of spurious correlations without requiring expensive group labels during training. LC is statistically motivated and easy-to-use. It improves the group-balanced accuracy by encouraging large margins for minority group and reducing both statistical and geometric skews. Through extensive experiments, The proposed method achieves state-of-the-art group-balanced accuracy and worst-group accuracy across several benchmarks.

\section*{Acknowledgement}
SL was partially supported by Alzheimer’s Association grant AARG-NTF-21-848627, NSF grant DMS 2009752, and NSF NRT-HDR Award 1922658. CFG acknowledges support from NSF OAC-2103936.
\bibliography{iclr2023_conference}
\bibliographystyle{iclr2023_conference}
\newpage
\appendix
\section*{Appendix}
\section{Proof of Proposition 1}
\label{appex:proof_theorem_1}
\setcounter{prop}{0}
\begin{prop}
Let $P(y,a)$ is the prior on group $(y,a)$, and $P(y,a|\bx)$ is the true posterior probability of group $(y,a)$ given $\bx$, the prediction:
\begin{equation}
    y = \arg\max_{y'}\sum_a\frac{P(y',a|\bx)}{P(y',a)} = \arg\max_{y}\sum_a\frac{P(y|a,\bx)P(a|\bx)}{P(y,a)}\nonumber
\end{equation}
is the solution to Eq.~\ref{eq:balanced_acc}.
\end{prop}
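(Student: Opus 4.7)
The plan is to expand the definition of $GBA(f)$, rewrite the class-conditional distribution via Bayes' rule, swap the order of summation and integration, and then read off the pointwise optimal prediction. Writing $\hat y(\bx) := \arg\max_{y'} f_{y'}(\bx)$, I would start from
\begin{equation}
GBA(f) \;=\; \frac{1}{KL}\sum_{(y,a)\in\gG} \int \bP(\bx\mid y,a)\,\mathbf{1}[\hat y(\bx)=y]\,d\bx .
\end{equation}

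Next, I would substitute $\bP(\bx\mid y,a) = \frac{P(y,a\mid \bx)\,P(\bx)}{P(y,a)}$, pull $P(\bx)$ outside, and interchange the finite sum with the integral to get
\begin{equation}
GBA(f) \;=\; \frac{1}{KL}\int P(\bx)\sum_{y}\mathbf{1}[\hat y(\bx)=y]\sum_{a}\frac{P(y,a\mid \bx)}{P(y,a)}\,d\bx .
\end{equation}
Since the indicator $\mathbf{1}[\hat y(\bx)=y]$ picks out exactly one value of $y$ for each $\bx$, the inner sum over $y$ equals $\sum_a P(\hat y(\bx),a\mid\bx)/P(\hat y(\bx),a)$. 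Because $P(\bx)\ge 0$, maximizing $GBA(f)$ reduces to the pointwise problem of choosing, for every $\bx$, the prediction $\hat y(\bx)$ that maximizes this last quantity. Any $f^\ast$ whose argmax coincides with $\arg\max_{y}\sum_a P(y,a\mid\bx)/P(y,a)$ therefore attains the supremum of $GBA$, which is the first equality in the statement.

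To obtain the second equality, I would apply the chain rule $P(y,a\mid\bx) = P(y\mid a,\bx)\,P(a\mid\bx)$ inside the sum. This is purely algebraic and requires no further assumption.

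The only step that needs any care is the interchange of sum and integral and the argument that pointwise maximization is legitimate; both are standard (finite sums and nonnegative integrands, with the decision rule $\hat y$ defined pointwise so no measurability issue arises beyond what is assumed for $f$). I do not anticipate a real obstacle: the proof is essentially a weighted Bayes-classifier computation, where the group prior $P(y,a)$ appears in the denominator because we are maximizing a uniformly weighted average over groups rather than the overall accuracy. The most delicate bookkeeping point is making sure the $1/(KL)$ normalization and the $P(\bx)$ factor are properly handled so that they do not affect the argmax, which they do not.
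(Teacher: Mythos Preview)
Your proposal is correct and follows essentially the same approach as the paper: both rewrite $GBA$ via Bayes' rule as an integral against $P(\bx)$ of $\sum_a P(y,a\mid\bx)/P(y,a)$ weighted by the classifier's choice, and then argue pointwise maximization. The only cosmetic difference is that the paper frames the classifier's output as a probability $P(c=y^{(j)}\mid\bx)$ and notes the integrand is a convex combination maximized by a one-hot assignment, whereas you use the indicator $\mathbf{1}[\hat y(\bx)=y]$ directly; the argument is otherwise identical.
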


\begin{proof}
    To simplify the notation, we define the output of the classifier as $c = \arg\max_{y' \in \mathcal{Y}} f_{y'}(\bx)$.

Following \cite{collell2016reviving}, for a group $(y^{(j)}, a^{(k)})$, the accuracy in this group can be written as,
\begin{equation}
    Acc(y^{(j)}, a^{(k)}) = \int_{\bx}\frac{P(y=y^{(j)}, a=a^{(k)}|\bx)P(c = y^{(j)}|\bx)}{P(y=y^{(j)}, a=a^{(k)})}P(\bx)d\bx.
\end{equation}
GBA in Eq.~\ref{eq:balanced_acc} thus can be rewritten as:
\begin{equation}
    GBA = \frac{1}{KL}\int_{\bx}\sum_{y^{(j)}}\sum_{a^{(k)}}\frac{P(y=y^{(j)}, a=a^{(k)}|\bx)P(c = y^{(j)}|\bx)}{P(y=y^{(j)}, a=a^{(k)})}P(\bx)d\bx.
    \label{eq:GBA_int_appex}
\end{equation}
Maximizing Eq.~\ref{eq:GBA_int_appex} is equivalent to obtain the optimal choice of $P(c = y^{(j)}|\bx)$ at each $\bx$. Since what inside of the integral is
\begin{align}
&\sum_{y^{(j)}}\sum_{a^{(k)}}\frac{P(y=y^{(j)}, a=a^{(k)}|\bx)P(c = y^{(j)}|\bx)}{P(y=y^{(j)}, a=a^{(k)})} \nonumber \\
=& \sum_{y^{(j)}} \left( \sum_{a^{(k)}}\frac{P(y=y^{(j)}, a=a^{(k)}|\bx)}{P(y=y^{(j)}, a=a^{(k)})} \right)P(c = y^{(j)}|\bx),
\end{align}
which is a convex combination, and is maximized at each $\bx$ if and only if we place probability 1 to the largest term. That is to say, at each $\bx$, we assign 1 to $P(c = y^{(j)}|\bx)$ where $\sum_{a^{(k)}}\frac{P(y=y^{(j)}, a=a^{(k)}|\bx)}{P(y=y^{(j)}, a=a^{(k)})}$ is the largest term among all possible $y$ values in $\mathcal{Y}$ and assigning 0 to other terms. Formally,
\begin{equation}
    P(c = y^{(j)}|\bx) = \left\{
    \begin{array}{ccc}
    1,  & \text{if} \ y^{(j)} = \arg\max_{y'}\sum_{a^{(k)}}\frac{P(y=y', a=a^{(k)}|\bx)}{P(y=y', a=a^{(k)})}\\
    0, & \text{Otherwise.}
  \end{array}
  \right.
\end{equation}
The second equation can be derived by Bayes' theorem.
\end{proof}

\section{Pseudo Code of the Proposed Algorithm}
\label{appex:pseudo_code}
We provide the pseudo-code of the proposed logit correction and Group MixUp in Algorithm~\ref{algorithm}. 

\begin{algorithm}[t]
\caption{LC for one-to-one mapping}\label{algorithm}
\SetKwData{Left}{left}\SetKwData{This}{this}\SetKwData{Up}{up}
  \SetKwFunction{Union}{Union}\SetKwFunction{FindCompress}{FindCompress}
  \SetKwInOut{Input}{Input}\SetKwInOut{Output}{Output}
  \Input{Training set $(X, Y)$, Initialize the ERM model $\hat{f}_{{\theta}}$ and the robust model $f_{{\theta}}$, $\#$ epochs $K$, $\#$ rampup epoch $T$, moving average momentum $\alpha$. }
  \For{epoch = $1$ \KwTo $K$}{
Sample a mini-batch $\{(\bx,y)\}$;\\
Update ERM network $\hat{f}(\theta)$ parameters by training on $\{(\bx, y)\}$ with Equation~\ref{eq:gce};\\
\For{$(\bx,y) \in \{(\bx,y)\}$}{
Let $p^{(\bx,y)}$ be the ERM model's probability outputs on sample $(\bx, y)$.\\
Let ${a_\bx} := \argmax p^{(\bx,y)}$ be the estimated value of the spurious attribute.\\
Update the group priors by $\hat{P}_{y,a_\bx} := \alpha \hat{P}_{y,a_\bx} + (1 - \alpha) p^{(\bx,y)}_{a_\bx}$
}

(Optional) Perform Group MixUp to obtain the synthesized batch: \\
$\tau := 0.5\cdot \exp(-5(1 - \textit{epoch})/T)^2$ sigmoid ramp up function\\
$\{\bx,y,\hat{P}^{(\bx)}\} = \text{GroupMixup}(\{\bx, y, {a_\bx}\}, \hat{P},\tau)$\\
\For {$(\bx,y,\hat{P}^{(\bx)}) \in \{\bx,y,\hat{P}^{(\bx)}\}$}{
\For{c = $1$ \KwTo \text{\# labeled classes}}{
Correct the $c$th logit of the robust mode by $f(\bx)_c := f(\bx)_c + \log \hat{P}^{(\bx)}_{c,{a_\bx}}$ 
}}
Update robust model's parameters $\theta$ with softmax cross entropy loss on $f_{\theta}(\bx)$.
}

\SetKwFunction{FMain}{GroupMixup}
    \SetKwProg{Fn}{Function}{:}{}
    \Fn{\FMain{$\{\bx,y,{a_\bx}\}, \hat{P}, \tau$}}{
    Obtain a set of samples $\{(\bar{x},\bar{y},a_{\bar{x}})\}$  that are estimated to be from minority groups i.e. $y\neq {a_\bx}$\\
        
    $\{(\bar{x},\bar{y},{a_{\bar{x}}})\} := \text{shuffle}(\{(\bar{x},\bar{y},a_{\bar{x}})\})$; \\
    Sample $\lambda \sim \text{Uniform}(1-2\tau ,1-\tau)$;\\
    \For{$i, (\bx,y,{a_\bx}) \in \text{enumerate}(\{(\bx,y,{a_\bx})\})$}{
    Sample $(\bar{x},\bar{y},{a_{\bar{x}}})$ from $\{(\bar{x},\bar{y},{a_{\bar{x}}})\}$ such that $ y = \bar{y}$\\
    $\bx := \lambda \bx + (1 - \lambda) \bar{x}$\\
    $y := y$\\
    $\hat{P}^{(\bx)} := \lambda \hat{P}_{y, \bx} + (1 - \lambda) \hat{P}_{y, \bar{x}}$, the correction term for MixUp sample $\bx$.
    } 
}
\textbf{End Function}
\end{algorithm}

\section{Definition of classification margin}
\label{appex:define_margin}
Let $f(\bx):\mathbb{R}^d \rightarrow \mathbb{R}^k$ be a model that outputs $k$ logits, following previous works~\citep{koltchinskii2002empirical, cao2019learning}, we define the margin of an example $(x,y)$ as 
\begin{align}
    m(x,y) = f(\bx)_y - \max_{j\neq y}f(\bx)_j.
\end{align}

We can then define the training margin for a group $g=(a,y)$ as the minimum margin of all classes
\begin{align}
    m_g= \min_{i\in g}m(\bx_i,y_i).
\end{align}
The margins for minority groups and majority groups are defined as the average margin of minority/majority groups. 

\section{Experiment Details}
\label{sec: experiment_details}
We utilize Adam optimizer with $\beta = (0.9, 0.999)$ without weight decay except for CelebA, we set weight decay to $1 \times 10^{-4}$, and a batch size of 256.
For Waterbird, we use SGD optimizer with weight decay of $1 \times 10^{-4}$. 
Learning rates of $1 \times 10^{-2}$, $1 \times 10^{-3}$ and  $1 \times 10^{-4}$ are used for Colored MNIST, Waterbird, and CelebA, respectively. We use a learning rate of $5 \times 10^{-4}$ for $0.5\%$ ratio of Corrupted CIFAR-10 and $1 \times 10^{-3}$ for the remaining ratios. We decay the learning rate at 10k iteration by 0.5 for both Colored MNIST and Corrupted CIFAR10. 
For CelebA, we adopt a cosine annealing learning rate schedule. For Waterbird, we set the $q$ in GCE as 0.8 and $0.7$ for other datasets. The ramp-up epoch $T$ is set as 50 for waterbird and CelebA and 2 for other datasets, and the moving average momentum $\alpha$ is set to 0.5 for all datasets.

\begin{figure}[tp]
\begin{center}
  \includegraphics[width=0.9\linewidth]{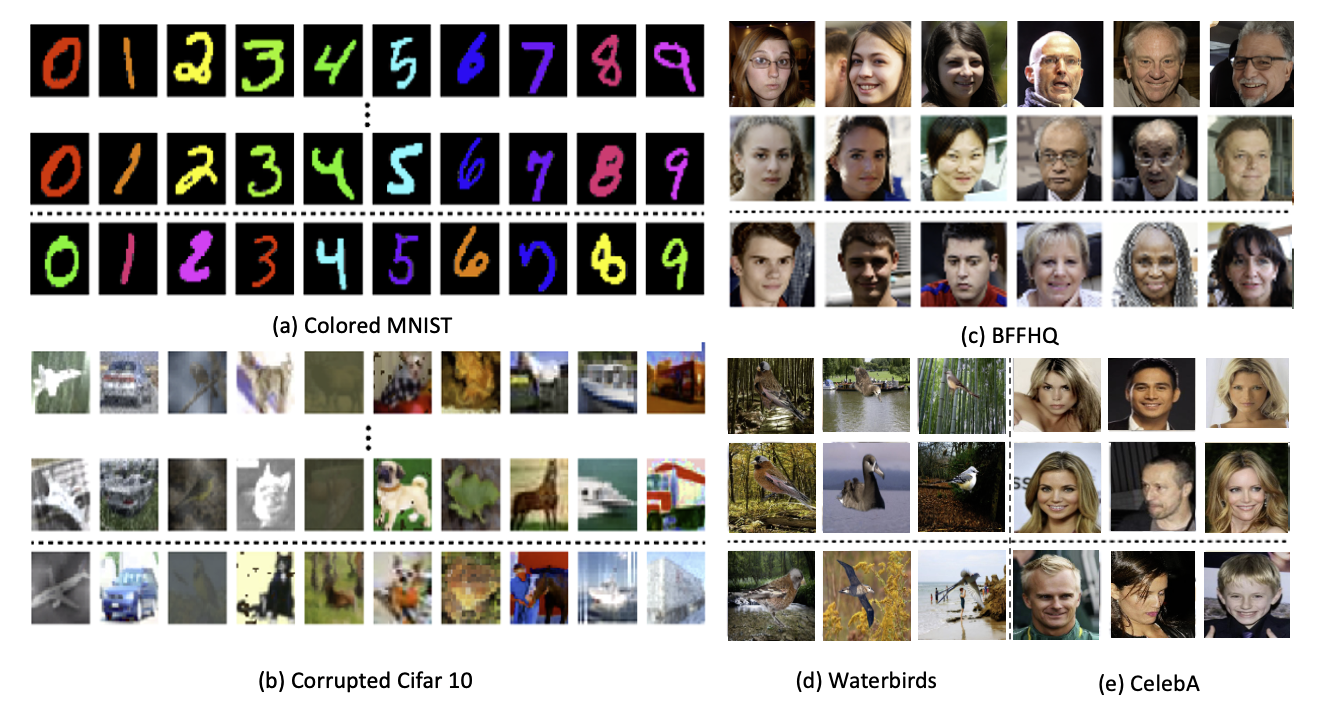}
\end{center}
  \caption{Example images of datasets used in our work. In each dataset, the images above the dotted line demonstrate the majority groups while the ones below the dotted line are minority groups. For Colored MNIST and Corrupted CIFAR-10, each column demonstrates each class. }
  \vspace{-0.5em}
\label{fig:dataset}
\end{figure}

\section{Results on other Spurious Correlation}
\label{appex:other_situations}
In the previous section, we report the results on one-to-one mapping which is a common scenario considered by previous works. In this section, we further examine the performance of previous approaches as well as LC on other spurious correlation types.
\begin{figure}[tp]
\centering
    \begin{subfigure}[b]{0.5\textwidth}
         \centering
         \includegraphics[width=\textwidth]{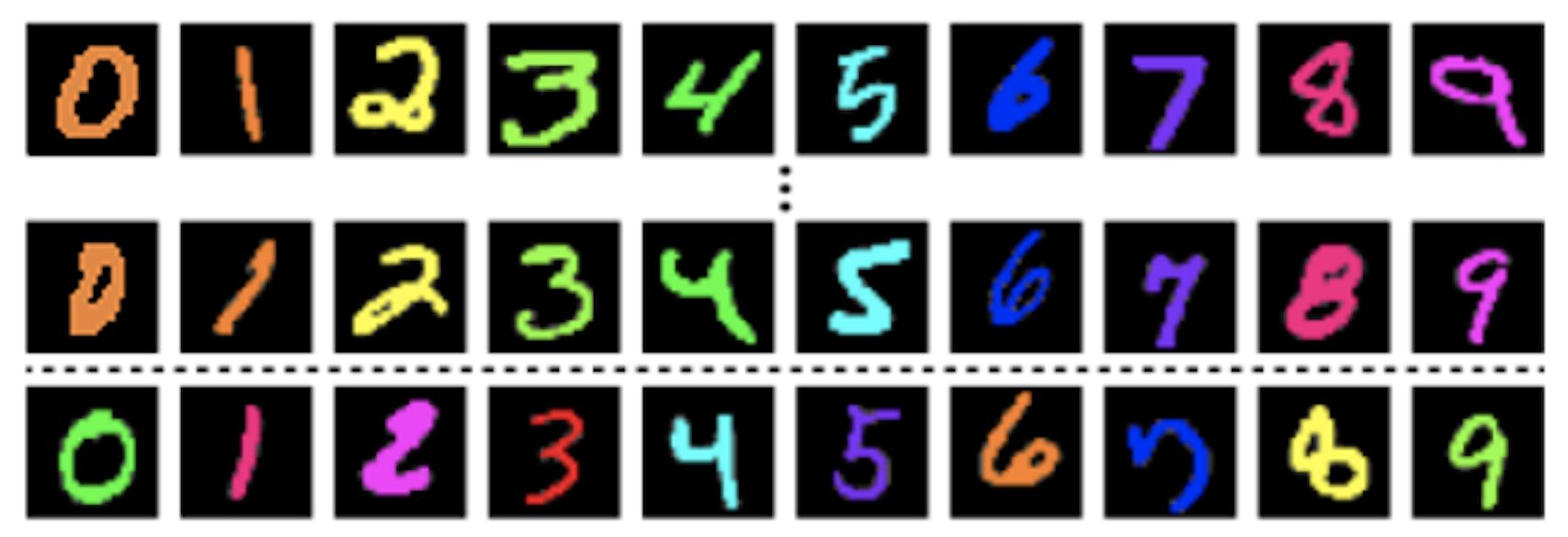}
         \caption{Many-to-one spurious correlation.}
         \label{fig:many-to-one}
     \end{subfigure}
     \begin{subfigure}[b]{0.5\textwidth}
         \centering
         \includegraphics[width=\textwidth]{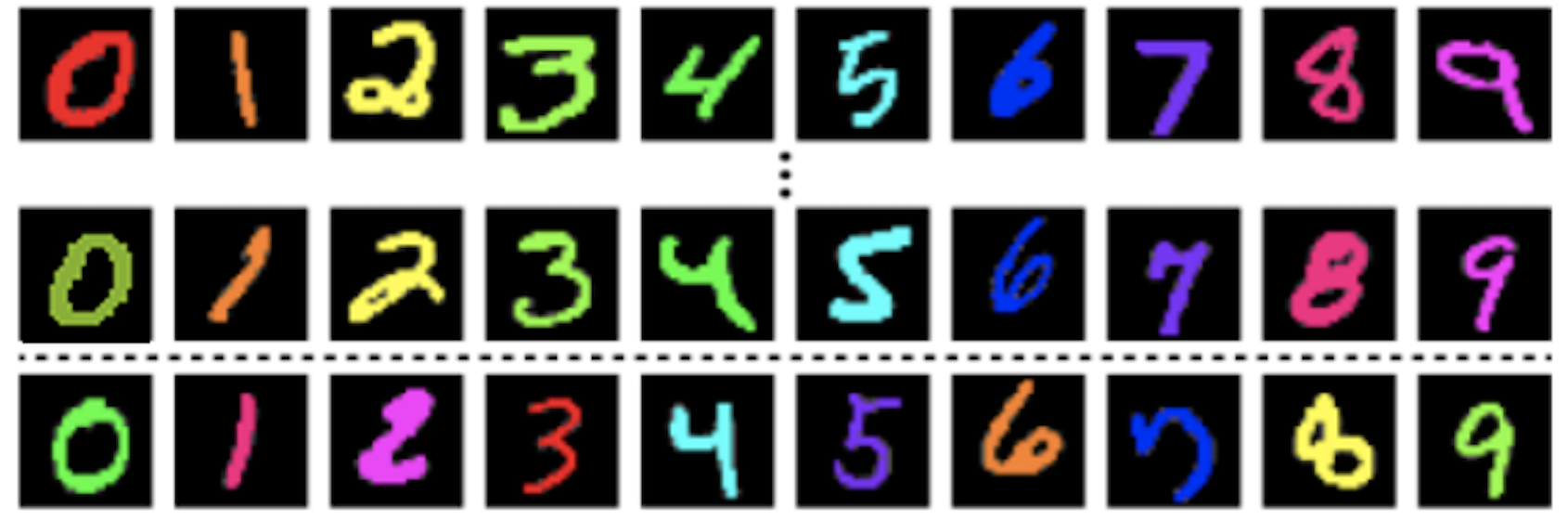}
         \caption{One-to-many spurious correlation.}
         \label{fig:one-to-many}
     \end{subfigure}
   \caption{Sample images for datasets containing one-to-many and many-to-one correlations. The first two rows show the training samples from the majority groups and the third row shows the validation set where groups are balanced. For  many-to-one, both digits 0's and 1's are colored browns. For one-to-many, digit 0's are colored in different colors~(red and dark green).}
\label{fig:mapping}
\end{figure}

\subsection{Many-to-One}
The digits in the MNIST training dataset is injected with different colors (similar to the original Colored MNIST). However, we inject the same color into digits 0 and 1 to obtain the many-to-one relationships between the label and the spurious attribute~(Figure~\ref{fig:many-to-one}). We evaluate the accuracy of the proposed logit correction method on the many-to-one setting mentioned in Sec.~\ref{sec:many-to-one}~(LC+). We also apply LC loss with the one-to-one assumption as LC. 

\begin{table}[h]
\small
\centering
\caption{Test accuracy on Colored MNIST data with many-to-one correlation.}
\begin{tabular}{l|c|cccc}
\toprule
      \multirow{2}{*}{Methods} & Group Info & \multicolumn{4}{c}{Colored MNIST} \\
             &Train \quad Val & 0.5 & 1.0 & 2.0 & 5.0 \\
      \midrule
      ERM & \xmark \quad \quad \cmark & 31.13 & 50.89 & 57.92 & 82.19 \\
      LfF & \xmark \quad \quad \cmark & 46.22 & 69.32 & 73.33 & 82.57 \\
      DFA & \xmark \quad \quad \cmark & 64.7 & 77.28 & 84.19 & 90.17 \\
      \midrule
      LC(ours) & \xmark \quad \quad \cmark & $\mathbf{65.32}$ & $78.05$ & $84.24$& $\mathbf{90.3}$ \\
      LC+(ours) & \xmark \quad \quad \cmark & $65.06$ & $\mathbf{78.57}$ & $\mathbf{84.5}$& $\mathbf{90.3}$ \\
      \bottomrule
\end{tabular}
\label{tab:many_one}
\end{table}

In Table~\ref{tab:many_one}, we report the accuracies on the balanced test set for Colored MNIST. The proposed method (LC and LC+) constantly outperforms all baselines. Although using the exact mapping information shows the best performance~(LC+), directly applying the one-to-one assumption shows a very similar performance.

\subsection{One-to-Many}
We augmented the MNIST training dataset with colors (similar to Colored MNIST) except that digit 0 has two colors as its major color attribute, as shown in ~\ref{fig:mapping_info}. We evaluate the accuracy of the proposed logit correction method on the one-to-many setting mentioned in Sec.\ref{sec:one-to-many}~(LC+). We also apply LC loss with the one-to-one assumption as LC. 

\begin{table}[h]
\small
\centering
\caption{\textbf{Benchmark results on the One-to-Many correlation} Test accuracy on Colored MNIST data with one-to-many mapping.}
\begin{tabular}{l|c|cccc}
\toprule
      \multirow{2}{*}{Methods} & Group Info & \multicolumn{4}{c}{Colored MNIST} \\
             &Train \quad Val & 0.5 & 1.0 & 2.0 & 5.0 \\
      \midrule
      ERM & \xmark \quad \quad \cmark & 38.47 & 48.41 & 67.41 & 80.61 \\
      LfF & \xmark \quad \quad \cmark & 52.79 & 66.07 & 75.09 & 83.5 \\
      DFA & \xmark \quad \quad \cmark & 70.11 & 78.75 & 83.06 & 90.41 \\
      \midrule
      LC(ours) & \xmark \quad \quad \cmark & $72.02$ & $79.5$ & $83.24$& $90.83$ \\
      LC+(ours) & \xmark \quad \quad \cmark & $\mathbf{72.26}$ & $\mathbf{80.1}$ & $\mathbf{84.1}$& $\mathbf{91.25}$ \\
      \bottomrule
\end{tabular}
\label{tab:one_many}
\end{table}

In Table~\ref{tab:one_many}, We report the accuracies on the balanced test set for Colored MNIST. The proposed method (LC and LC+) constantly outperforms all baselines. Although using the exact mapping information shows the best performance~(LC+), directly applying the one-to-one assumption shows a very similar performance.

\section{More ablation study}
\subsection{Ablation on $q$}
We conduct an ablation study to better understand $q$ in the GCE loss. Intuitively, when $q$ is closer to 0, it results in a loss function with a gradient that emphasizes the hard example (e.g. cross-entropy), and when $q$ is closer to 1, it results in a loss closer to mean absolute error which produces a gradient that emphasizes less on hard examples. The choice of $q$ depends on different datasets of how fast the easy examples (i.e. the spurious correlated example) can be learned. We conducted an ablation study on  the waterbird dataset, it turns out the proposed method is quite robust to different values of $q$, as illustrated in Table~\ref{tab:ablation_q}.
\begin{table}[h]
\ifdefined\ISARXIV
\else
\footnotesize
\fi
\centering
\caption{{Ablation study on $q$ of GCE.}}
\begin{tabular}{c|c|c|c|c|c|c}
\toprule
$q$ & 0.1 & 0.3 & 0.5 & 0.7 & 0.8 & 0.9\\
\midrule
\text{LC} & 84.2 & 88.6 & 88.9 & 89.9 & 90.7 & 89.4 \\
\bottomrule
\end{tabular}
\label{tab:ablation_q}
\ifdefined\ISARXIV
\else
\fi
\end{table}
\subsection{Logit correction v.s. reweighting}
We also conduct experiments to compare logit correction with re-weighting/re-sampling. There is both theoretical and empirical evidence showing that LC is more effective than reweighting for training a linear classifier since it is not only Fisher consistent, but also able to address the geometric skew. Theoretically, recent papers~\cite{xu2020understanding, sagawa2020investigation} proved that overparametrized linear model, regardless of trained with reweighted cross entropy and original cross-entropy, would eventually result in the max-margin classifier with enough training. This is also validated by our experiments in which we adopt an ImageNet pre-trained ResNet-18, and only train the last linear classifier on Corrupted CIFAR10 (5\%), we obtain that reweighting with the group prior (GBA: 23.75\%) results in slightly better results than ERM (GBA: 20.77\%). Both are worse than logit correction (GBA: 30.20\%).  

We further compare reweighting with logit correction on the fully trained model when ground truth group information are available.  LC also outperforms the Fisher-Consistent reweighting (Table~\ref{tab:ablation_reweighting}).
\begin{table}[h]
\ifdefined\ISARXIV
\else
\footnotesize
\fi
\centering
\caption{Comparison of reweighting and logit correction with ground truth group information.}
\begin{tabular}{c|c|c|c}
\toprule
\text{Method} & ERM &  Fisher-Consistent Reweighting	& logit correction\\
\midrule
\text{Test acc.} & 39.51 & 42.13 &  54.31 \\
\bottomrule
\end{tabular}
\label{tab:ablation_reweighting}
\ifdefined\ISARXIV
\else
\fi
\end{table}

\end{document}